%File: anonymous-submission-latex-2023.tex
\documentclass[letterpaper]{article} % DO NOT CHANGE THIS
\usepackage{aaai23}  % DO NOT CHANGE THIS
\usepackage{times}  % DO NOT CHANGE THIS
\usepackage{helvet}  % DO NOT CHANGE THIS
\usepackage{courier}  % DO NOT CHANGE THIS
\usepackage[hyphens]{url}  % DO NOT CHANGE THIS
\usepackage{graphicx} % DO NOT CHANGE THIS
\urlstyle{rm} % DO NOT CHANGE THIS
  % DO NOT CHANGE THIS
\usepackage{natbib}  % DO NOT CHANGE THIS AND DO NOT ADD ANY OPTIONS TO IT
\usepackage{caption} % DO NOT CHANGE THIS AND DO NOT ADD ANY OPTIONS TO IT
\DeclareCaptionStyle{ruled}{labelfont=normalfont,labelsep=colon,strut=off} % DO NOT CHANGE THIS
\frenchspacing  % DO NOT CHANGE THIS
\setlength{\pdfpagewidth}{8.5in}  % DO NOT CHANGE THIS
\setlength{\pdfpageheight}{11in}  % DO NOT CHANGE THIS
%
% These are recommended to typeset algorithms but not required. See the subsubsection on algorithms. Remove them if you don't have algorithms in your paper.
\usepackage{booktabs}
\usepackage{algorithm}
\usepackage{algorithmic}
\usepackage{multirow}
\usepackage{subcaption}

\usepackage[english]{babel}
\usepackage{amssymb}
\usepackage{amsfonts}
\usepackage{mathrsfs}
\usepackage{amsmath}
\usepackage{mathbbol}
\usepackage{booktabs}
\usepackage{bbm}
\usepackage{algorithm}
\usepackage{algorithmic}
\usepackage{makecell}
\usepackage{multirow}
\usepackage{amsmath}
\usepackage[switch]{lineno}
\usepackage{color}
\usepackage{newfloat}
\usepackage{listings}
\usepackage{epsfig,txfonts,threeparttable}

\newtheorem{definition}{Definition}

\newtheorem{lemma}{Lemma}
\newtheorem{theorem}{Theorem}

\newtheorem{proof}{Proof}

%
% These are are recommended to typeset listings but not required. See the subsubsection on listing. Remove this block if you don't have listings in your paper.
\usepackage{newfloat}
\usepackage{listings}
\lstset{%
	basicstyle={\footnotesize\ttfamily},% footnotesize acceptable for monospace
	numbers=left,numberstyle=\footnotesize,xleftmargin=2em,% show line numbers, remove this entire line if you don't want the numbers.
	aboveskip=0pt,belowskip=0pt,%
	showstringspaces=false,tabsize=2,breaklines=true}
\floatstyle{ruled}
\newfloat{listing}{tb}{lst}{}
\floatname{listing}{Listing}
\newcommand{\argmin}{\mathop{{\rm arg}\min}}

\newcommand{\mbf}[1]{\mathbf{#1}}
\newcommand{\mbb}[1]{\mathbb{#1}}

\makeatletter
\newcommand{\ostar}{\mathbin{\mathpalette\make@circled*}}
\newcommand{\make@circled}[2]{%
	\ooalign{$\m@th#1\smallbigcirc{#1}$\cr\hidewidth$\m@th#1#2$\hidewidth\cr}%
}
\newcommand{\smallbigcirc}[1]{%
	\vcenter{\hbox{\scalebox{0.77778}{$\m@th#1\bigcirc$}}}%
}
\makeatother
%
%\nocopyright
%
% PDF Info Is REQUIRED.
% For /Title, write your title in Mixed Case.
% Don't use accents or commands. Retain the parentheses.
% For /Author, add all authors within the parentheses,
% separated by commas. No accents, special characters
% or commands are allowed.
% Keep the /TemplateVersion tag as is
\pdfinfo{
	/TemplateVersion (2023.1)
}

\usepackage{color}
\setcounter{secnumdepth}{0} %May be changed to 1 or 2 if section numbers are desired.

% The file aaai22.sty is the style file for AAAI Press
% proceedings, working notes, and technical reports.
%

% Title

% Your title must be in mixed case, not sentence case.
% That means all verbs (including short verbs like be, is, using,and go),
% nouns, adverbs, adjectives should be capitalized, including both words in hyphenated terms, while
% articles, conjunctions, and prepositions are lower case unless they
% directly follow a colon or long dash
\title{Global Weighted Tensor Nuclear Norm for Tensor Robust Principal Component Analysis}

\author {
   Libin Wang, Yulong Wang, Shiyuan Wang, Youheng Liu, Yutao Hu, Hong Chen
}
\affiliations {
}

% REMOVE THIS: bibentry
% This is only needed to show inline citations in the guidelines document. You should not need it and can safely delete it.
% \usepackage{bibentry}
% END REMOVE bibentry

\begin{document}

\maketitle
\begin{abstract}
Tensor Robust Principal Component Analysis (TRPCA), which aims to recover a low-rank tensor corrupted by sparse noise, has attracted much attention in many real applications. This paper develops a new Global Weighted TRPCA method (GWTRPCA), which is the first approach simultaneously considers the significance of intra-frontal slice and inter-frontal slice singular values in the Fourier domain. Exploiting this global information, GWTRPCA penalizes the larger singular values less and assigns smaller weights to them. Hence, our method can recover the low-tubal-rank components more exactly. Moreover, we propose an effective adaptive weight learning strategy by a Modified Cauchy Estimator (MCE) since the weight setting plays a crucial role in the success of GWTRPCA. To implement the GWTRPCA method, we devise an optimization algorithm using an Alternating Direction Method of Multipliers (ADMM) method. Experiments on real-world datasets validate the effectiveness of our proposed method.
\end{abstract}

\section{Introduction}
Real-world data such as images, texts, videos and bioinformatics are usually high-dimensional and can be approximated by low-rank structures. Exploiting these low-rank structures from high-dimensional data is an essential problem in many real applications, e.g., face recogniton \cite{face}, collaborative filtering \cite{Koren08} and image denoising \cite{Bouwmans18}. Robust Principal Component Analysis (RPCA) \cite{RPCA}, one of the most representative methods in this problem, has attracted much attention. It aims to separate the observed data matrix $\mbf{X}\in\mbb{R}^{d_1\times d_2}$ into a low-rank matrix $\mbf{L}$ and a sparse matrix $\mbf{E}$, where $\mbf{L}$ denotes clean data and $\mbf{E}$ denotes the noise. It has been proved in \cite{RPCA} that $\mbf{L}$ and $\mbf{E}$ can be exactly recovered with high probability under proper conditions by solving the following convex problem
\begin{equation}\label{Eq:RPCA}
	\min_{\mbf{L}, \mbf{E}\in\mbb{R}^{d_1\times d_2}}\|\mbf{L}\|_*+\lambda\|\mbf{E}\|_1
	~~\text{s.t.}~~\mbf{X}=\mbf{L}+\mbf{E},
\end{equation}
where $ \lambda >0$ is the regularization parameter,  $\|\mbf{\cdot}\|_*$ and $\|\mbf{\cdot}\|_1$ denote the nuclear norm (sum of the singular values) and $\ell_1$-norm (sum of the absolute values of all entries), respectively.

\begin{figure}	\includegraphics[width=8cm,height=2cm]{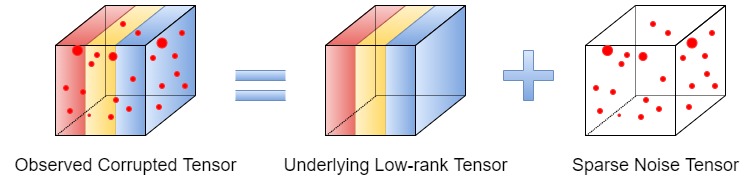}
	\caption{An illustration of Tensor Robust Principal Component Analysis.}
	\label{Fig:TRPCA}
\end{figure}  
One major drawback of RPCA is that it can only handle 2-order (matrix) data. However, multi-dimensional data, termed as tensor data \cite{Kolda09}, is widely used in many real applications \cite{Gao20AAAI,Mu20PR}. For example, a color image is a 3-order tensor with column, row and color modes; a gray scale video is indexed by two spatial variables and one temporal variable. To handle the tensor data, RPCA needs to restructure the high-order tensor data into a matrix and thus ignores the information embedded in multi-dimensional structures. The information loss can lead to performance degradation and an inexact recovery. 

To alleviate this problem, many Tensor RPCA methods have been proposed, which aim to separate the observed tensor data $\mathcal{X}$ into a clean low-rank tensor $\mathcal{L}$ and a sparse noise tensor $\mathcal{E}$ (An illustration can be seen in Fig. \ref{Fig:TRPCA}). \citet{Liu13TPAMI} initially extended the matrix nuclear norm to tensors by proposing Sum of Nuclear Norms (SNN). It is defined as $\sum_i{\|\mbf{X}^{\{i\}}\|}_*$, where $ \mbf{X}^{\{i\}} $ is the mode-$i $ matricization of $\mathcal{X}$ \cite{Liu13TPAMI}. The SNN based Tensor RPCA method \cite{SNN} can be formulated as
\begin{equation}\label{eq:SNN}
	\min_{\mathcal{L}, \mathcal{E} \in \mathbb{R}^{d_{1} \times d_{2} \times d_{3}}}\sum_{i=1}^{k}\lambda_i{\|{\mbf{L}^{\{ i \}}}\|}_{*}+{\|\mathcal{E}\|}_{1}~~\text{s.t.}~~\mathcal{X}=\mathcal{L}+\mathcal{E},
\end{equation}
where $ \lambda_i >0$ are regularization parameters. Although the SNN based Tensor RPCA method can guarantee the recovery, the algorithm is nonconvex. To overcome this shortcoming, \citet{TRPCA} proposed the Tensor Robust Principal Component (TRPCA) method with a new tensor nuclear norm, which is motivated by the tensor Singular Value Decomposition (t-SVD) \cite{Kilmer11} :
\begin{equation}
	\min _{\mathcal{L}, \mathcal{E} \in \mathbb{R}^{d_{1} \times d_{2} \times d_{3}}}\|\mathcal{L}\|_{*}+\lambda\|\mathcal{E}\|_{1}~~\text {s.t.}~~ \mathcal{X}=\mathcal{L}+\mathcal{E},
\end{equation}
where $\|\cdot\|_*$ denotes the tensor nuclear norm (TNN).

Inspired by TNN defined in TRPCA, many improvements of TRPCA have been proposed. \citet{ETRPCA} presented an Enhanced TRPCA (ETRPCA) method, which defines a weighted tensor Schatten p-norm to deal with the difference between singular values of tensor data. \citet{FTNN} developed a Frequency-Filtered TRPCA mehtod. This method defines a frequency-filtered TNN, which utilizes the prior knowledge between different frequency bands. Besides, \citet{Liu2018} indicated that TRPCA fails to employ the low-rank structure in the third mode and extending TNN with core matrix. \citet{Jiang20JCAM} defined a partial sum of tubal nuclear norm (PSTNN) of a tensor, which is a surrogate of the tensor tubal multi-rank. By minimizing PSTNN, the smaller singular values are shrinked while the larger singular values are not. Additionally, TNN may over-penalize the large singular values of tensor data \cite{Li21ICPR}.
\citet{Kong2018} proposed t-Schatten-$p$ quasi-norm to improve TNN, which is non-convex when $ 0<p<1 $ and can be a better approximationof  the $l_1$ norm of tensor multi-rank. Besides, a new t-Gamma tensor quasi-norm as a non-convex regularzation was defined by \citet{Cai2019} to approximate the low-rank component.
\begin{figure}	\includegraphics[width=8cm,height=3cm]{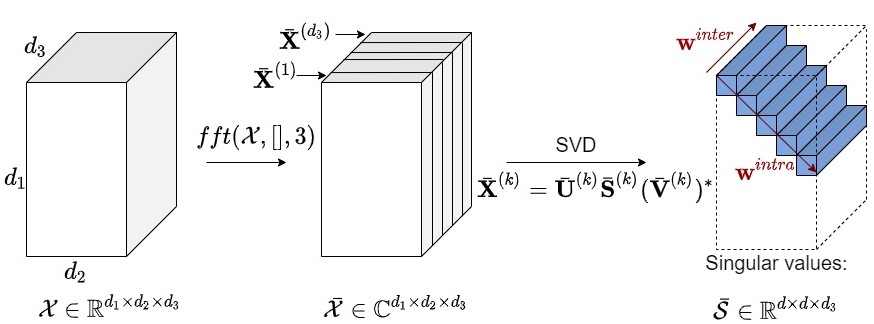}
	\caption{An illustration of the global weight to singular values of an $ d_1 \times d_2 \times d_3$ tensor in the Fourier domain ($d=\min(d_1,d_2)$; $ \mbf{w}^{intra} \in \mbb{R}^{d}$: weight vector for intra-frontal slice singular values; $ \mbf{w}^{inter} \in \mbb{R}^{d_3}$: weight vector for inter-frontal slice singular values).}
	\label{Fig:GWTNN}
\end{figure}
\subsection{Motivation and Contribution }
Despite the theoretical guarantee and emperical success, TRPCA treats all singular values of tensor data equally and regularizes different singular values with the same weights. It may cause an inexact estimation of the tubal rank of tensor data \cite{ETRPCA}. Additionally, singular values usually have clear physical meanings in many real applications. For example, larger singular values are generally associated with some prominent information of the image. Hence, we should assign smaller weights for the larger singular values to efficiently preserve the prominent information. 

To overcome the drawback of TRPCA, Enhanced TRPCA (ETRPCA) \cite{ETRPCA} assigns different weights to penalize distinct singular values. However, it only considers intra-frontal slice singular values. We observe that the decreasing property also holds on for inter-frontal slice singualr values (along the third dimension). As is shown in Fig. \ref{Fig:GWTNN}, we only exploit the local information of tensor singular values and neglect the inter-frontal slice weights when we just assign intra-frontal slice weights like ETRPCA. This can result in an inaccurate evaluation of weights and performance decline. To handle this problem, our method GWTRPCA defines a new Global Weighted TNN and can learn weights of different singular values more exactly.

The contributions of this paper are as follows: 
\begin{itemize}
	\item [1.] We propose a new weighted TNN, Global Weighted Tensor Nuclear Norm (GWTNN), which considers the significance of intra-frontal slice and inter-frontal slice singular values simultaneously in the Fourier domain. GWTNN can be a better surrogate of the tubal rank function. 
	\item [2.] We put forward a new TRPCA method employing GWTNN termed Global Weighted Tensor Principle Component Analysis (GWTRPCA). We also devise an effective optimization algorithm for GWTRPCA based on the ADMM framework.
	\item [3.] We propose an adaptive weight learning strategy by a Modified Cauchy Estimator (MCE). We utilize the strategy to learn inter-frontal slice weights and further improve the performance of GWTRPCA.
\end{itemize}

\section{Notations and Preliminaries}
\subsection{Notations}
	In this paper, we represent scalars, vectors, matrices and tensors using normal letters, boldface lower-case, upper-case letters and script letters, respectively. For a complex matrix $ \mbf{X} \in \mathbb{C}^{d_{1} \times d_{2}} $, $ \mbf{X}^* $ denote its conjugate transpose. We denote $ \lceil t \rceil $ as the nearest integer more than or equal to t. For $ t\in \{1,2,\cdots,n\} $, we denote it as $ t \in [n] $. 
	For brevity, we summarzie the main natations in Tab. \ref{tab:notation}.

\subsection{T-Product}
To define the tensor-tensor product (or t-product), we first introduce some useful tensor operators.
\begin{definition}\label{def:fold}(\textbf{The fold and unfold operators}  \cite{Kilmer11})
	For  any 3-order tensor $\mathcal{X}\in \mbb{R}^{d_1\times d_2\times d_3}$, we define
	%\begin{equation}\label{eq:unfold}
	%\textnormal{unfold}(\mathcal{X})=\left[\mbf{X}^{(1)};\mbf{X}^{(2)};\cdots;\mbf{X}^{(d_3)}\right]
	%\end{equation}
	\begin{equation}\label{eq:unfold}
		\textnormal{unfold}(\mathcal{X})=
		\begin{bmatrix}
			\mbf{X}^{(1)}  \\
			\mbf{X}^{(2)}  \\
			\vdots         \\
			\mbf{X}^{(d_3)}\\
		\end{bmatrix},~
		\textnormal{fold}(\textnormal{unfold}(\mathcal{X}))=\mathcal{X}.
	\end{equation}
	
\end{definition}

\begin{definition}\label{def:bcirc}(\textbf{The bcirc operator}  \cite{Kilmer11})
	The block circulant matrix of any 3-order tensor  $\mathcal{X}\in \mathbb{R}^{d_1\times d_2\times d_3}$
		is defined as
        \begin{equation}\label{eq:bcirc}
			\textnormal{bcirc}(\mathcal{X})=
			\begin{bmatrix}
				\mbf{X}^{(1)}    &\mbf{X}^{(d_3)}     &\cdots   &\mbf{X}^{(2)}   \\
				\mbf{X}^{(2)}    &\mbf{X}^{(1)}       &\cdots   &\mbf{X}^{(3)}   \\
				\vdots           &\vdots              &\ddots   &\vdots          \\
				\mbf{X}^{(d_3)}  &\mbf{X}^{(2)}       &\cdots   &\mbf{X}^{(1)}    \\
			\end{bmatrix}
		\end{equation}
		where $\mbf{X}^{(k)}$ denotes the $k$-th frontal slice of $\mathcal{X}$.
	\end{definition}

\begin{definition}\label{def:T-product}(\textbf{T-product} \cite{Kilmer11})
		The t-product of any two 3-order tensor  $\mathcal{X}\in \mbb{R}^{d_1\times d_2\times d_3}$ and
		$\mathcal{Y}\in \mbb{R}^{d_2\times l\times d_3}$ is
		defined to be a tensor
		of size $d_1\times l\times d_3$
		\begin{equation}\label{eq:T-product}
			\mathcal{X}*\mathcal{Y}=\textnormal{fold}(\textnormal{bcirc}
			(\mathcal{X})\cdot\textnormal{unfold}(\mathcal{Y})).
		\end{equation}
	\end{definition}
	In fact, the t-product $\mathcal{Z}=\mathcal{X}*\mathcal{Y}$  can be computed efficiently with the aid of $fft$ \cite{TRPCA}. Specifically,
	let $\bar{\mathcal{X}}=fft(\mathcal{X},[],3)$ and
	$\bar{\mathcal{Y}}=fft(\mathcal{Y},[],3)$.
	We can first calculate the $k$-th frontal slice of $\bar{\mathcal{Z}}$ as $\bar{\mbf{Z}}^{(k)}
	=\bar{\mbf{X}}^{(k)}\bar{\mbf{Y}}^{(k)}$,
	where $\bar{\mbf{X}}^{(k)}$ and $\bar{\mbf{Y}}^{(k)}$
	are the $k$-th frontal slice of $\bar{\mathcal{X}}$
	and $\bar{\mathcal{Y}}$, respectively.
	Then we can obtain $\mathcal{Z}=ifft(\bar{\mathcal{Z}},[],3)$.
\begin{table}
	\begin{center}
		\normalsize
		\begin{tabular}{l|l}
			\hline\cline{1-2}
			%\textbf{Notation}       &\textbf{Description}
			%\\ \hline
			\textbf{Notation}       &\textbf{Description}\\ \hline
			%$x$,~$\mbf{x}$,~$\mbf{X}$,~$\mcal{X}$  &scalar, vector, matrix, tensor   \\
			$\mathcal{X} \in \mathbb{R}^{d_{1} \times d_{2} \times d_{3}}$                             &tensor data  \\
			%$\mathcal{X}^H\in\mbb{R}^{d_1\times d_2\times d_3}$                           &Conjugate transpose of $\mathcal{X}$ \\
			$\mathcal{L} \in \mathbb{R}^{d_{1} \times d_{2} \times d_{3}}$                             &low-rank tensor  \\
			$\mathcal{S} \in \mathbb{R}^{d_{1} \times d_{2} \times d_{3}}$                              &sparse tensor  \\
			$\mathcal{X}_{ijk}$ or $\mathcal{X}(i,j,k)$                             &$(i,j,k)$-th entry of $\mathcal{X}$\\
			$\mathcal{X}(i,:,:)$
			&$i$-th horizontal slice of $\mathcal{X}$\\
			$\mathcal{X}(:,j,:)$
			&$j$-th lateral slice of $\mathcal{X}$\\
			$\mathcal{X}(:,:,k)$ or $\mbf{X}^{(k)} $
			&$k$-th frontal slice of $\mathcal{X}$\\
			DFT                         &Discrete Fourier Transform \\
			FFT                         &Fast Fourier Transform \\
			$\bar{\mathcal{X}}\in\mbb{C}^{d_1\times d_2\times d_3}$                         &DFT of $\mathcal{X}$ along the $3$-rd dimension\\
			$ fft(\mathcal{X},[],3) $ & FFT of $\mathcal{X}$ along the $3$-rd dimension\\
			$\|\mathcal{X}\|_{*}$  &Tensor Nuclear Norm \\
			$\|\mathcal{X}\|_{1}$  &Tensor $\ell_1$ Norm  \\
			TNN                  &Tensor Nuclear Norm \\
			GWTNN                 &Global Weighted TNN    \\
			$ [n] $                &set $ \{1,2,\cdots,n\} $    \\
			\hline\cline{1-2}
		\end{tabular}
	\end{center}
	\caption{Key notations used in this paper.}
	\label{tab:notation}
\end{table}
\subsection{T-SVD and Tensor Nuclear Norm}
\label{subsec:TNN}

Before introducing the Tensor-SVD (T-SVD)
and Tensor Nuclear Norm (TNN), it is necessary to
introduce some useful definitions associated with tensors.
\begin{definition}\label{def:Conj}(\textbf{Conjugate transpose} \cite{Kilmer11})
	The conjugate transpose of a tensor $\mathcal{X}\in \mbb{C}^{d_1\times d_2\times d_3}$ is the tensor $\mathcal{X}^* \in \mbb{C}^{d_2\times d_1\times d_3}$ obtained by conjugate transposing each of the frontal slices and then reversing the order of the transposed frontal slices 2 through $ d_3 $.
\end{definition}

\begin{definition}\label{def:Identity}(\textbf{Identity  tensor} \cite{Kilmer11})
	The identity tensor $\mathcal{I}\in \mbb{R}^{d\times d\times d_3}$
	is the tensor with its first frontal slice being the
	$d\times d$ identity matrix, and other frontal slices being all zeros.
\end{definition}

\begin{definition}\label{def:Orth}(\textbf{Orthogonal tensor} \cite{Kilmer11})
	A tensor  $\mathcal{Q}\in \mbb{R}^{d\times d\times d_3}$ is orthogonal if it satisfies
	$\mathcal{Q}^**\mathcal{Q}=\mathcal{Q}*\mathcal{Q}^*=\mathcal{I}$.
\end{definition}

\begin{definition}\label{def:fdiag}(\textbf{F-diagonal tensor} \cite{Kilmer11})
	A tensor is called f-diagonal if each of its frontal slices is a diagonal matrix.
\end{definition}

\begin{definition}\label{def:T-SVD}(\textbf{Tensor Singular Value Decomposition, T-SVD} \cite{Kilmer11})
	The tensor $\mathcal{X}\in \mbb{R}^{d_1\times d_2\times d_3}$ can be factorized as
	\begin{equation}\label{eq:T-SVD}
		\mathcal{X}=\mathcal{U}*\mathcal{S}*\mathcal{V}^*,
	\end{equation}
	where $\mathcal{U}\in \mbb{R}^{d_1\times d_1\times d_3}$,
	$\mathcal{V}\in \mbb{R}^{d_2\times d_2\times d_3}$ are orthogonal,
	and $\mathcal{S}\in \mbb{R}^{d_1\times d_2\times d_3}$ is an f-diagonal tensor.
\end{definition}

\begin{definition}\label{def:TNN}(\textbf{Tensor Nuclear Norm} \cite{TRPCA})
	Let $\mathcal{X}=\mathcal{U}*\mathcal{S}*\mathcal{V}^*$ be the t-SVD of $\mathcal{X}\in \mbb{R}^{d_1\times d_2\times d_3}$
	and $d=\min(d_1,d_2)$.
	The  tensor nuclear norm of $\mathcal{X}$ is defined as
	\begin{equation}\label{eq:WNN}
		\|\mathcal{X}\|_{*}
		=\sum_{i=1}^d\mathcal{S}(i,i,1)
		=\frac{1}{d_3}\sum_{k=1}^{d_3}\sum_{i=1}^d
		\sigma_{i}\left(\bar{\mbf{X}}^{(k)}\right),
	\end{equation}
	where $\sigma_{i}\left(\bar{\mbf{X}}^{(k)}\right)$ is the $i$-th singular value of $\bar{\mbf{X}}^{(k)}$.
	%$d=\min(d_1,d_2)$
	%$\bar{\mathcal{X}}$ is the result by applying DFT (Discrete Fourier Transform) on $\mathcal{X}$ along the third dimension,
	%$\bar{\mbf{X}}^{(k)}$ is the  $k$-th frontal slice of $\bar{\mathcal{X}}$, and
	%and $\mbf{W}\in\mbb{R}^{d_3\times d}$ denotes the weight matrix for the singular values of  $\bar{\mathcal{X}}$.
\end{definition}

\section{Proposed Method}
\subsection{Global Weighted Tensor Nuclear Norm}

It is known that the singular values of a matrix have the decreasing property. Given any tensor $ \mathcal{X} \in \mbb{R}^{d_1\times d_2\times d_3}$ and $d=\min\{d_1,d_2\} $, let $\bar{\mathcal{X}}$ be the result by applying DFT (Discrete Fourier Transform) on $\mathcal{X}$ along the third dimension,
$\bar{\mbf{X}}^{(k)}$ is the  $k$-th frontal slice of $\bar{\mathcal{X}}$,
and $\sigma_{i}\left(\bar{\mbf{X}}^{(k)}\right)$ is the $i$-th singular value of $\bar{\mbf{X}}^{(k)}$. Thus, for any $ k \in [d_3] $, the singular values in the  $k$-th frontal slice $\bar{\mbf{X}}^{(k)}$ have the same decreasing property, i.e.,
\begin{equation}\label{Decrease-intra}
	\sigma_{1}\left(\bar{\mbf{X}}^{(k)}\right) \geq \sigma_{2}\left(\bar{\mbf{X}}^{(k)}\right) \geq
	\cdots  \geq
	\sigma_{d}\left(\bar{\mbf{X}}^{(k)}\right) \geq 0
\end{equation}

\begin{figure}
	\includegraphics[width=8cm,height=2.8cm]{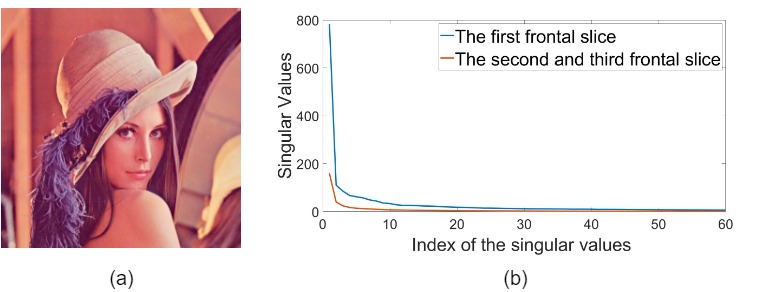}
	\caption{Illustration of the decreasing property of the inter-frontal slice singular values in the Fourier domain. (a) A color image can be modeled as a tensor $ \mathcal{M} \in \mbb{R}^{512\times 512\times 3}$; (b) plot of the first 60 singular values in the first, second and third frontal slice of $ \bar{\mathcal{M}} $, respectively. Note that the singular values in the 2nd frontal slice are equal to those in the 3rd due to the conjugate symmetry of FFT.
	}
	\label{fig:Lena}
\end{figure}

The decreasing property also holds on when we consider the inter-frontal slice (across different frontal slices) singular values. As is shown in Fig. \ref{fig:Lena}, the singular values in the first frontal slice tend to be larger than those in the second and third frontal slice for a 3-order tensor $ \bar{\mathcal{M}} $ in the Fourier domain. Otherwise, we observe that the singular values in the second frontal slice are equal to those in the third frontal slice. This can be due to the property of DFT. Hence, the singular values of distinct frontal slices $ \bar{\mathcal{X}}^{(k)} $ have another interesting property, i.e.,
\begin{equation}\label{Decrease-inter}
\sum_{i=1}^d \sigma_i\left(\bar{\mbf{X}}^{(1)}\right) \geq
\cdots \geq
\sum_{i=1}^d \sigma_i\left(\bar{\mbf{X}}^{(\left\lceil\frac{d_{3}+1}{2}\right\rceil)}\right) \geq 0 
\end{equation}

\begin{equation}\label{Decrease-symmetric}
\sum_{i=1}^d \sigma_i\left(\bar{\mbf{X}}^{(k)}\right)=\sum_{i=1}^d \sigma_i\left(\bar{\mbf{X}}^{(d_{3}-k+2)}\right), k= \left\lceil\frac{d_{3}+1}{2}\right\rceil+1, \cdots, d_{3} 
\end{equation}
Now we simultaneously consider the decreasing property of intra-frontal slice and inter-frontal slice singular values. Based on the global decreasing property, we can assign different weights to distinct singular values and give a better surrogate of the tubal rank with a new weighted tensor nuclear norm. Thus, we have the following definiton of Global Weighted Tensor Nuclear Norm (GWTNN).

It is worth noting that ETRPCA proposes the weighted tensor Schatten $ p$-norm (WTSN) $
\|\mathcal{X}\|_{\mbf{w},S_p}
=
\left(\sum_{k=1}^{d_3}\sum_{i=1}^d w_i
\sigma_{i}^p\left(\bar{\mbf{X}}^{(k)}\right)\right)^{\frac{1}{p}} 
$ for $p>0$. When we overlook the differences of inter-frontal slice singular values and equalize the weights $ w_{k}^{inter} = 1 $, GWTNN reduces to a special case of WTSN when $p=1$. Compared to WTSN, GWTNN possesses two advantages: First, GWTNN can be seen as a generalization of WTSN. GWTNN can approximate the tubal rank function more exactly and maintain the flexibility to solve different practical problems. Second, our proposed GWTNN considers the importance of different frontal slices in the Fourier domain. It can perserve the significant components in DFT, consequently improving the recovery performance.

\subsection{Optimization}
In this part, we develop an efficient optimization
algorithm based on the ADMM framework.

The Lagrangian function of the GWTRPCA method (\ref{eq:GWTRPCA}) is formulated as
\begin{equation}\label{eq:Lag}
	\begin{split}
		L({\mathcal{L}}, {\mathcal{E}}, \mathcal{Y}, \mu)
		=&\|{\mathcal{L}}\|_{\ostar}+\lambda \|{\mathcal{E}}\|_{1}\\
		&+\frac{\mu}{2}\left\|{\mathcal{L}}+{\mathcal{E}}-\mathcal{X}+\mathcal{Y}/\mu\right\|_F^2
		-\frac{\mu}{2}\|\mathcal{Y}/\mu\|_F^2.
	\end{split}
\end{equation}
%\begin{equation}\label{eq:Lag}
%L({\mathcal{L}}, {\mathcal{E}}, \mathcal{Y}, \mu)
%=\|{\mathcal{L}}\|_{\ostar}+\lambda \|{\mathcal{E}}\|_{1}
%+\frac{\mu}{2}\left\|{\mathcal{L}}+{\mathcal{E}}-\mathcal{X}+\mathcal{Y}/\mu\right\|_F^2
%-\frac{\mu}{2}\|\mathcal{Y}/\mu\|_F^2
%\end{equation}
The variables can be updated alternatively by fixing others in each iteration.
Let ${\mathcal{E}}_t$, $\mathcal{Y}_t$, $\mbf{w}^{iner}_{t}$ and $\mu_t$ be the value  of
${\mathcal{E}}$, $\mathcal{Y}$, $\mbf{w}^{iner}$ and $\mu$ in the $t$-th iteration, respectively.

%Given the initialization of c, z and ¦«, we alternatively update
%each variable while fixing others in each iteration

\textbf{Step 1: } Update ${\mathcal{L}}$

\begin{equation}\label{eq:UpdateL}
	{\mathcal{L}}_{t+1}=\argmin_{{\mathcal{L}}\in\mbb{R}^{d_1\times d_2\times d_3}}
	\frac{1}{2}\left\|{\mathcal{L}}-\left(\mathcal{X}-{\mathcal{E}}_t-\mathcal{Y}_t/\mu_t\right)\right\|_F^2
	+\frac{1}{\mu_t}\|{\mathcal{L}}\|_{\ostar}.
\end{equation}

\begin{algorithm}[!t]
	\caption{GWTRPCA}
	\label{alg:algorithm1}
	\textbf{Input}: Data tensor $\mathcal{X}\in\mbb{R}^{d_1\times d_2\times d_3}$,
	weight vector $\mbf{w}^{intra}\in\mbb{R}^{d}, d=\min\{d_1,d_2\}$
	and the parameter $\lambda$. \\
	\textbf{Initialize}: ${\mathcal{L}}_0=0$, ${\mathcal{E}}_0=0$, $\mathcal{Y}_0=0$, $ \mbf{w}^{inter}=\mbf{1}$,
	$\mu=10^{-2}$, $\rho=1.1$, $\mu_{\text{max}}=10^7$,
	and $\epsilon=10^{-6}$.
	\begin{algorithmic}[1]    
		\WHILE{not convergence}
		\STATE Update ${\mathcal{L}}$ by Eq. (\ref{eq:UpdateL});\\
		\STATE Update ${\mathcal{E}}$ by Eq. (\ref{eq:UpdateS});
		\STATE Update the multipliers
		$
		\mathcal{Y}_{t+1}=\mathcal{Y}_t+\mu_t({\mathcal{L}}_{t+1}-{\mathcal{E}}_{t+1}-\mathcal{X})
		$;\\
		\STATE Update the parameter $\mu$ by
		$
		\mu_{t+1}=\min\{\rho\mu_t, \mu_{\text{max}}\};
		$
		\STATE Check the convergence conditions: \\
		$
		\|\mathcal{X}-{\mathcal{E}}_{t+1}-{\mathcal{L}}_{t+1}\|_{\infty}<\epsilon.
		$ \\
		\ENDWHILE
	\end{algorithmic}
	\textbf{Output}: ${\mathcal{L}}={\mathcal{L}}_{t+1}$, ${\mathcal{E}}={\mathcal{E}}_{t+1}$.
\end{algorithm}

\begin{figure*}[htbp]
	\centering
	\begin{subfigure}[b]{0.15\textwidth}
		\includegraphics[width=\linewidth]{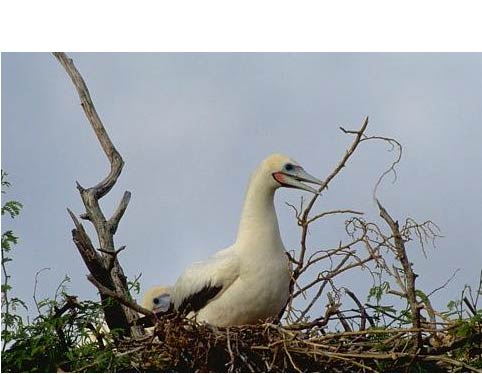}
		\caption{Oringinal}
	\end{subfigure}
	\begin{subfigure}[b]{0.15\textwidth}
		\includegraphics[width=\linewidth]{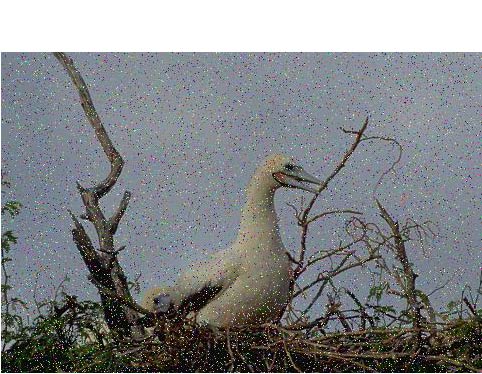}
		\caption{Observed}
	\end{subfigure}
	\begin{subfigure}[b]{0.15\textwidth}
		\includegraphics[width=\linewidth]{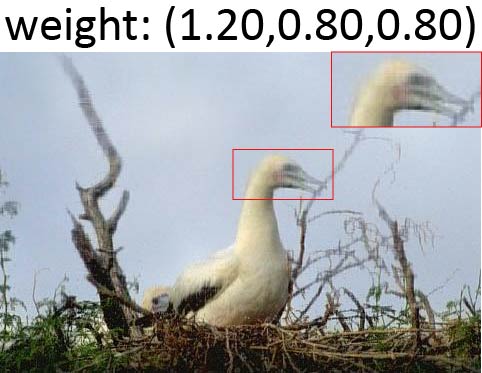}
		\caption{PSNR=26.95}
	\end{subfigure}
	\begin{subfigure}[b]{0.15\textwidth}
		\includegraphics[width=\linewidth]{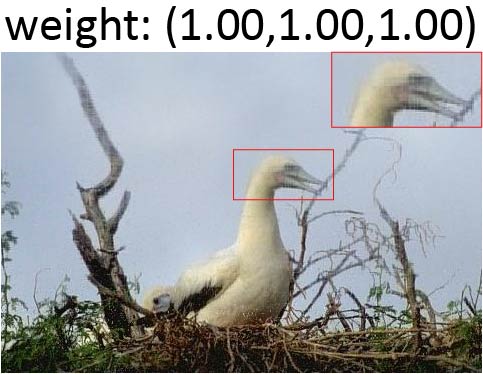}
		\caption{PSNR=28.27}
	\end{subfigure}
	\begin{subfigure}[b]{0.15\textwidth}
		\includegraphics[width=\linewidth]{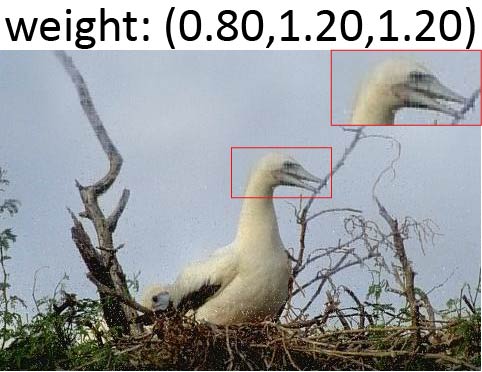}
		\caption{PSNR=29.05}
	\end{subfigure}
	\begin{subfigure}[b]{0.15\textwidth}
		\includegraphics[width=\linewidth]{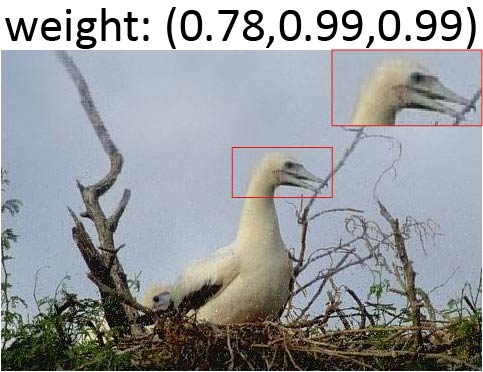}
		\caption{PSNR=29.17}
	\end{subfigure}
	\caption{Recovery performance comparison of a test image under different weight settings. (a) Original image; (b) observed image with 10\% corruption; (c)-(e) recovered images when we set $ \mbf{w}^{inter}=(1.20,0.80,0.80) $, $ \mbf{w}^{inter}=(1.0,1.0,1.0) $, $ \mbf{w}^{inter}=(0.80,1.20,1.20) $, respectively; (f) recovered image by GWTRPCA and the \textbf{adaptively} learned weight is $ \mbf{w}^{inter}=(0.78,0.99,0.99) $, which leads to the best recovery. Best viewed in ×2 sized color pdf file.}
	\label{Fig:Example_Synthetic}
\end{figure*}

The problem above has a closed-form optimal solution.
To derive the solution, we first introduce the following
lemma about the solution to the Weighted Nuclear Norm Minimization (WNNM) problem of matrix data \cite{Gu17IJCV}.
\begin{lemma}\cite{Gu17IJCV}\label{lemma:WNNM}
	Given a data matrix $\mbf{M}\in\mbb{R}^{d_1\times d_2}$
	and a weight vector
	$\mbf{w}=[w_1,\cdots,w_d]^T\in\mbb{R}^d$
	where $d=\min(d_1, d_2)$,
	let $\mbf{M}=\mbf{U}\mbf{S}\mbf{V}^{T}$ be the SVD of $\mbf{M}$.
	Consider the WNNM problem
	\begin{equation}\label{eq:WNNM}
		\mbf{prox}_{\|\cdot\|_{\mbf{w},*}}(\mbf{M}):=
		\argmin_{\mbf{L}\in\mbb{R}^{d_1\times d_2}}
		\frac{1}{2}\|\mbf{L}-\mbf{M}\|_F^2+\|\mbf{L}\|_{\mbf{w},*}
	\end{equation}
	where $\mbf{prox}_{\|\cdot\|_{\mbf{w},*}}(\mbf{M})$
	denotes the proximal operator w.r.t.
	the weighted nuclear norm $\|\cdot\|_{\mbf{w},*}$,
	$\|\mbf{L}\|_{\mbf{w},*}
	=\sum_{i=1}^dw_i\sigma_i(\mbf{L})$.
	If the weights satisfy $0\leq w_1\leq \cdots \leq w_d$ , the global solution to (\ref{eq:WNNM}) is
	$$
	\mbf{L}^*=\mbf{prox}_{\|\cdot\|_{\mbf{w},*}}(\mbf{M})
	=\mbf{U}\mbf{P}_{\mbf{w}}(\mbf{S})\mbf{V}^T,
	$$
	where $\mbf{P}_{\mbf{w}}(\mbf{S})$ is a diagonal matrix
	with the same size of $\mbf{S}$
	and $(\mbf{P}_{\mbf{w}}(\mbf{S}))_{ii}
	=(\mbf{S}_{ii}-w_i)_+$.
	Here $(x)_+=x$ if $x>0$ and $(x)_+=0$ otherwise.
\end{lemma}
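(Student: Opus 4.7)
The plan is to reduce the matrix optimization in (\ref{eq:WNNM}) to a separable scalar problem by showing that the optimal $\mbf{L}^\star$ must share ordered singular vectors with $\mbf{M}$. First, I would expand the Frobenius term as
\begin{equation*}
\tfrac{1}{2}\|\mbf{L}-\mbf{M}\|_F^2 = \tfrac{1}{2}\|\mbf{L}\|_F^2 + \tfrac{1}{2}\|\mbf{M}\|_F^2 - \langle \mbf{L},\mbf{M}\rangle,
\end{equation*}
and rewrite $\|\mbf{L}\|_F^2 = \sum_i \sigma_i(\mbf{L})^2$. Invoking von Neumann's trace inequality gives $\langle \mbf{L},\mbf{M}\rangle \le \sum_i \sigma_i(\mbf{L})\sigma_i(\mbf{M})$, with equality attained precisely when $\mbf{L}$ admits an SVD $\mbf{U}\,\mathrm{diag}(\sigma_i(\mbf{L}))\,\mbf{V}^T$ using the same orthogonal factors as $\mbf{M}$ (with matching ordering). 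Since the regularizer $\|\mbf{L}\|_{\mbf{w},*} = \sum_i w_i \sigma_i(\mbf{L})$ depends only on the singular values of $\mbf{L}$, fixing any tentative singular-value vector and then choosing the singular vectors to align with those of $\mbf{M}$ can only decrease the objective. Hence there exists a minimizer of the form $\mbf{L}^\star = \mbf{U}\,\mathrm{diag}(\sigma_i^\star)\,\mbf{V}^T$.

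Next, I would substitute this parametrization back and obtain the equivalent scalar program
\begin{equation*}
\min_{\sigma_1\ge \sigma_2\ge \cdots\ge \sigma_d\ge 0}\ \sum_{i=1}^{d}\Bigl[\tfrac{1}{2}(\sigma_i - s_i)^2 + w_i\,\sigma_i\Bigr],
\end{equation*}
where $s_i = \mbf{S}_{ii}$ are the (non-increasing) singular values of $\mbf{M}$. Dropping the ordering constraint momentarily, the problem is separable and each one-dimensional subproblem $\min_{\sigma\ge 0}\ \tfrac{1}{2}(\sigma-s_i)^2 + w_i\sigma$ has the well-known soft-thresholded solution $\sigma_i^\star = (s_i - w_i)_+$.

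It then remains to verify that these unconstrained minimizers automatically satisfy the monotonicity constraint $\sigma_1^\star \ge \sigma_2^\star \ge \cdots \ge \sigma_d^\star \ge 0$, which is exactly where the hypothesis $0 \le w_1 \le \cdots \le w_d$ enters. From $s_i \ge s_{i+1}$ and $w_i \le w_{i+1}$ one gets $s_i - w_i \ge s_{i+1} - w_{i+1}$, and since $x \mapsto (x)_+$ is monotone non-decreasing, the ordering is preserved after thresholding. Thus the separable minimizer is feasible, and so it is the global optimum, yielding $\mbf{L}^\star = \mbf{U}\,\mbf{P}_{\mbf{w}}(\mbf{S})\,\mbf{V}^T$.

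The main obstacle is the first reduction: von Neumann's inequality only guarantees simultaneous diagonalizability of $\mbf{L}^\star$ and $\mbf{M}$ in terms of singular values, and a careful argument is needed when $\mbf{M}$ has repeated singular values (the SVD is not unique, and one must pick a common pair of orthogonal factors). All other steps, including the soft-thresholding and the monotonicity check, are essentially routine once the reduction is in place.
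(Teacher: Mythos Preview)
The paper does not actually prove this lemma; it is stated as a cited result from \cite{Gu17IJCV} and used as a black box in the derivation of Theorem~\ref{th:GWTNN}. Your argument is correct and is essentially the standard proof that appears in the literature: the von~Neumann reduction to singular values, followed by separable soft-thresholding, with the weight monotonicity $0\le w_1\le\cdots\le w_d$ ensuring the thresholded values remain non-increasing so that the ordering constraint is inactive. The caveat you flag about repeated singular values is handled in the usual way (pick any common orthogonal factors achieving equality in von~Neumann; the resulting $\mbf{L}^\star$ does not depend on the choice), so there is no genuine gap.
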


For a tensor data $\mathcal{M}\in\mbb{R}^{d_1\times d_2\times d_3}$ and a weight matrix 	$\mbf{W}\in\mbb{R}^{d_3\times d}$
where $d=\min(d_1, d_2)$, let $\mathcal{M}=\mathcal{U}*\mathcal{S}*\mathcal{V}^{*}$
be the T-SVD of $\mathcal{M}$. For each $ \tau >0 $, we can define a tensor operator  $\mathcal{P}_{\tau}(\cdot)$ as
$$
(\mathcal{P}_{\tau}(\bar{\mathcal{S}}))^{(k)}
=\mbf{P}_{w_k^{inter}\mbf{w}^{intra}}\left(\bar{\mbf{S}}^{(k)}\right), 
~k\in[d_3]. 
$$
We can define the GWTNN proximal operator as follows 
\begin{equation}\label{Ope:GWTNN}
\mbf{prox}_{\|\cdot\|_{\ostar}}(\mathcal{M})
=\mathcal{U}*
ifft\left(\mathcal{P}_{\tau}(\bar{\mathcal{S}}), [], 3\right)
*\mathcal{V}^*.
\end{equation}
\begin{theorem}\label{th:GWTNN}
	For any data tensor $ \tau >0 $,
	$\mathcal{M}\in\mbb{R}^{d_1\times d_2\times d_3}$
	and a weight matrix
	$\mbf{W}\in\mbb{R}^{d_3\times d}$, if the intra-frontal slice weight vector $\mbf{w}^{intra}$ satisfies 
	$0\leq w_1^{intra}\leq \cdots w_d^{intra}$, the GWTNN proximal operator (\ref{Ope:GWTNN}) obeys
	\begin{equation}\label{eq:S2WTNN-sub}
		\mbf{prox}_{\|\cdot\|_{\ostar}}(\mathcal{M})
		=\argmin_{{\mathcal{L}}\in\mbb{R}^{d_1\times d_2\times d_3}}
		\frac{1}{2}\|{\mathcal{L}}-\mathcal{M}\|_F^2
		+\tau\|{\mathcal{L}}\|_{\ostar}.
	\end{equation}
\end{theorem}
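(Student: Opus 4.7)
The plan is to reduce the tensor optimization problem in (\ref{eq:S2WTNN-sub}) to a collection of decoupled matrix weighted-nuclear-norm subproblems in the Fourier domain and then invoke Lemma \ref{lemma:WNNM} on each one. First I would apply Parseval's (Plancherel's) identity for the discrete Fourier transform along the third mode to rewrite the Frobenius fidelity term as
\begin{equation}\label{eq:parseval-plan}
\tfrac{1}{2}\|\mathcal{L}-\mathcal{M}\|_F^2
=\tfrac{1}{2 d_3}\sum_{k=1}^{d_3}\bigl\|\bar{\mathbf{L}}^{(k)}-\bar{\mathbf{M}}^{(k)}\bigr\|_F^2 .
\end{equation}
In parallel, the definition of GWTNN (read off from the proximal operator in (\ref{Ope:GWTNN}) together with the TNN template (\ref{eq:WNN})) lets me write
\begin{equation}\label{eq:gwtnn-plan}
\|\mathcal{L}\|_{\ostar}
=\tfrac{1}{d_3}\sum_{k=1}^{d_3}\sum_{i=1}^{d} w_k^{inter}\,w_i^{intra}\,\sigma_i\!\bigl(\bar{\mathbf{L}}^{(k)}\bigr)
=\tfrac{1}{d_3}\sum_{k=1}^{d_3}\bigl\|\bar{\mathbf{L}}^{(k)}\bigr\|_{w_k^{inter}\mathbf{w}^{intra},*}.
\end{equation}

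Combining (\ref{eq:parseval-plan}) and (\ref{eq:gwtnn-plan}) the joint objective separates into $d_3$ independent matrix problems, one per frequency slice:
\begin{equation*}
\min_{\bar{\mathbf{L}}^{(k)}}\; \tfrac{1}{2}\bigl\|\bar{\mathbf{L}}^{(k)}-\bar{\mathbf{M}}^{(k)}\bigr\|_F^2 + \tau\,\bigl\|\bar{\mathbf{L}}^{(k)}\bigr\|_{w_k^{inter}\mathbf{w}^{intra},*}, \qquad k\in[d_3].
\end{equation*}
Next, I would verify that each per-slice subproblem satisfies the hypothesis of Lemma \ref{lemma:WNNM}: the composite weight vector $\tau\, w_k^{inter}\mathbf{w}^{intra}$ is still non-decreasing in $i$ because $\tau>0$ and $w_k^{inter}\ge 0$, while $\mathbf{w}^{intra}$ is non-decreasing by assumption. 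Applying the lemma yields $\bar{\mathbf{L}}^{*(k)} = \bar{\mathbf{U}}^{(k)} \mathbf{P}_{w_k^{inter}\mathbf{w}^{intra}}(\bar{\mathbf{S}}^{(k)}) (\bar{\mathbf{V}}^{(k)})^{*}$, which is exactly the $k$-th frontal slice of the tensor $\bar{\mathcal{U}}*\mathcal{P}_\tau(\bar{\mathcal{S}})*\bar{\mathcal{V}}^*$ appearing in (\ref{Ope:GWTNN}) after one reads the t-SVD in the Fourier domain. Stacking these slice-wise optima and taking $\mathrm{ifft}$ along the third mode then reproduces $\mathbf{prox}_{\|\cdot\|_{\ostar}}(\mathcal{M})$.

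The main obstacle I foresee is a reality/conjugate-symmetry check: Lemma \ref{lemma:WNNM} is stated for real matrices, whereas $\bar{\mathbf{M}}^{(k)}$ is generally complex, and we ultimately need $\mathcal{L}^*\in\mathbb{R}^{d_1\times d_2\times d_3}$. I would address this in two steps. First, I would note that the proof of Lemma \ref{lemma:WNNM} (von Neumann's trace inequality plus a Ky Fan majorization argument) extends verbatim to complex matrices with SVD $\bar{\mathbf{M}}^{(k)}=\bar{\mathbf{U}}^{(k)}\bar{\mathbf{S}}^{(k)}(\bar{\mathbf{V}}^{(k)})^{*}$, so per-slice shrinkage is well defined over $\mathbb{C}$. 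Second, because $\mathcal{M}$ is real the frontal slices of $\bar{\mathcal{M}}$ satisfy $\bar{\mathbf{M}}^{(k)}=\overline{\bar{\mathbf{M}}^{(d_3-k+2)}}$, and because the weight pairs $(w_k^{inter},w_i^{intra})$ are symmetric under $k\mapsto d_3-k+2$ (this is exactly the symmetry stated in (\ref{Decrease-symmetric})), the per-slice optima inherit the same conjugate symmetry, guaranteeing that the inverse FFT returns a real tensor and that the separable minima reassemble into the global minimizer, completing the proof.
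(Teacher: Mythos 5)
Your proposal is correct and follows exactly the route the paper's (deferred) proof must take: Parseval to move the Frobenius term into the Fourier domain, slice-wise decoupling of the GWTNN into weighted nuclear norms with composite weights $\tau\, w_k^{inter}\mathbf{w}^{intra}$, the complex extension of Lemma \ref{lemma:WNNM} via von Neumann's trace inequality, and the conjugate-symmetry argument ensuring the inverse FFT returns a real minimizer. Your only blemish is inherited from the paper itself: the shrinkage threshold in the final slice formula should carry the factor $\tau$, i.e.\ $\mathbf{P}_{\tau w_k^{inter}\mathbf{w}^{intra}}$, consistent with the composite weight you correctly identified earlier.
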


\begin{proof}
	Please see the proof in Supplementary matrial B due to space limitation.
\end{proof}

\textbf{Step 2: } Update ${\mathcal{E}}$

\begin{equation}\label{eq:UpdateS}
	{\mathcal{E}}_{t+1}=\argmin_{{\mathcal{E}}\in\mbb{R}^{d_1\times d_2\times d_3}}
	\frac{1}{2}\left\|{\mathcal{E}}-\left(\mathcal{X}-{\mathcal{L}}_{t+1}-\mathcal{Y}_t/\mu_t\right)\right\|_F^2
	+\frac{\lambda}{\mu}\|{\mathcal{E}}\|_{1}.
\end{equation}
The problem above has a closed-form solution. Inspired by soft-thresholding operator, we have 
\begin{equation}\label{soft-thresholding}
	\mathcal{E}_{t+1}=\mathrm{T}_{\frac{\lambda}{\mu_{t}}}\left(\mathcal{H}_{t+1}\right), \mathcal{H}_{t+1}=\mathcal{X}-{\mathcal{L}}_{t+1}-\mathcal{Y}_t/\mu_t,
\end{equation}
where the $ (i,j,k)- $th element of $ \mathrm{T}_{\frac{\lambda}{\mu_{t}}}\left(\mathcal{H}_{t+1}\right) $ is $ sign\left(\left(\mathcal{H}_{t+1}\right)_{ijk}\right) \cdot \max \left(\left|\left(\mathcal{H}_{t+1}\right)_{ijk}\right|-\lambda / \mu_{t}, 0\right).$

\textbf{Step 3: } Update $\mathcal{Y}$ and $\mu$

$$
\mathcal{Y}_{t+1}=\mathcal{Y}_t+\mu_t({\mathcal{L}}_{t+1}-{\mathcal{E}}_{t+1}-\mathcal{X}), \mu_{t+1}=\min\{\rho\mu_t, \mu_{\text{max}}\}.
$$

\begin{figure*}[htbp]
	\centering
	\begin{subfigure}[b]{0.135\textwidth}
		\includegraphics[width=\linewidth]{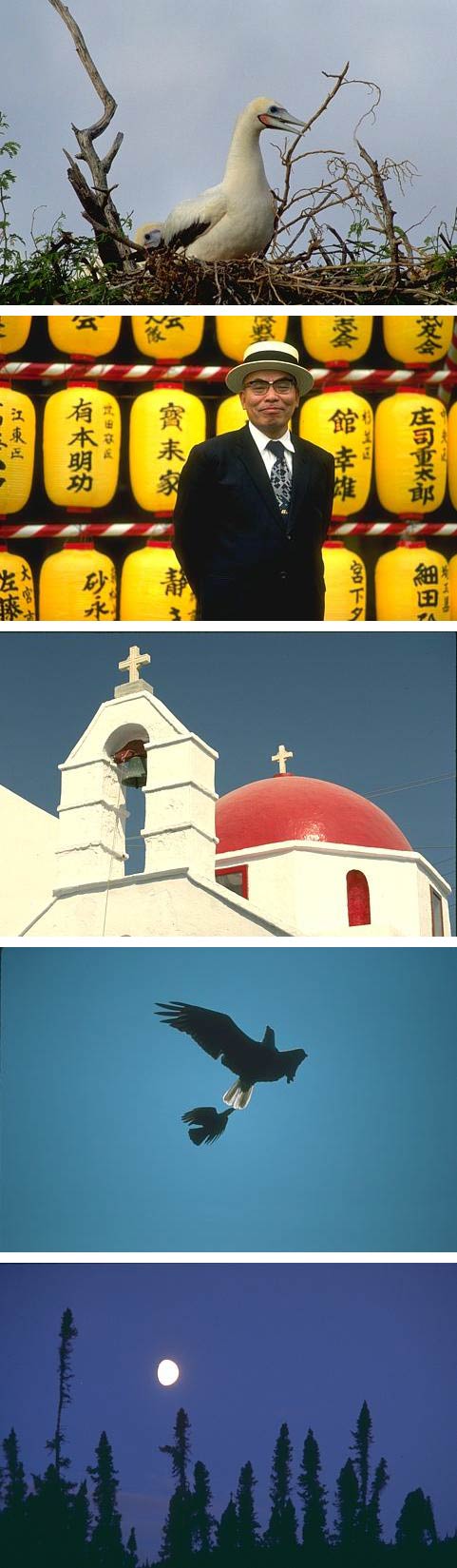}
		\caption{Original}
	\end{subfigure}
	\begin{subfigure}[b]{0.135\textwidth}
		\includegraphics[width=\linewidth]{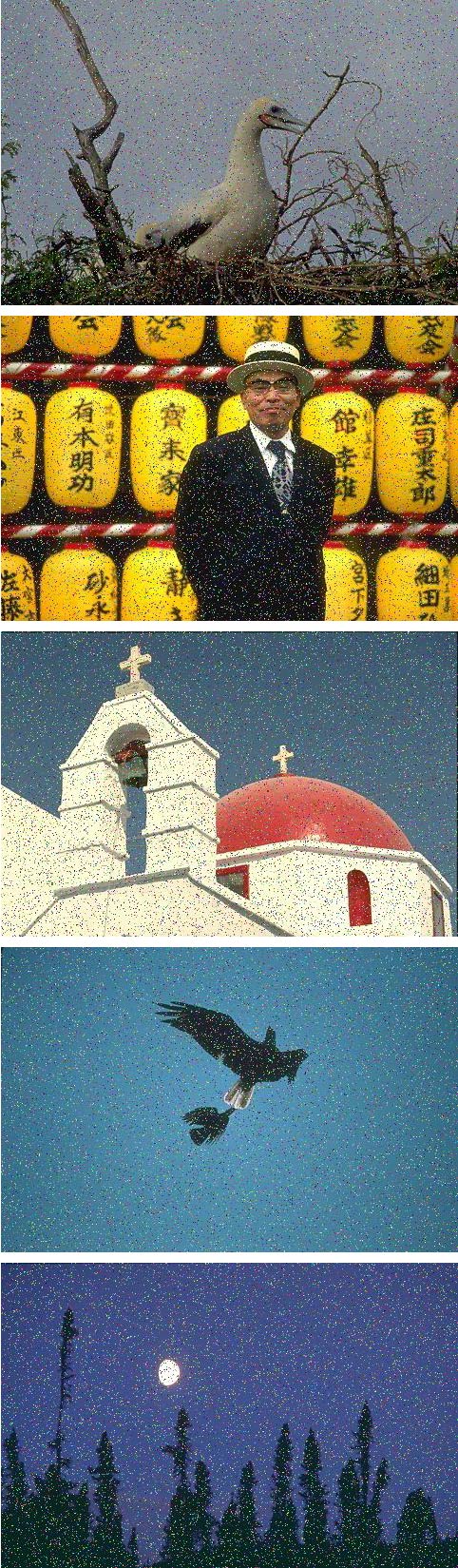}
		\caption{Observed}
	\end{subfigure}
	\begin{subfigure}[b]{0.135\textwidth}
		\includegraphics[width=\linewidth]{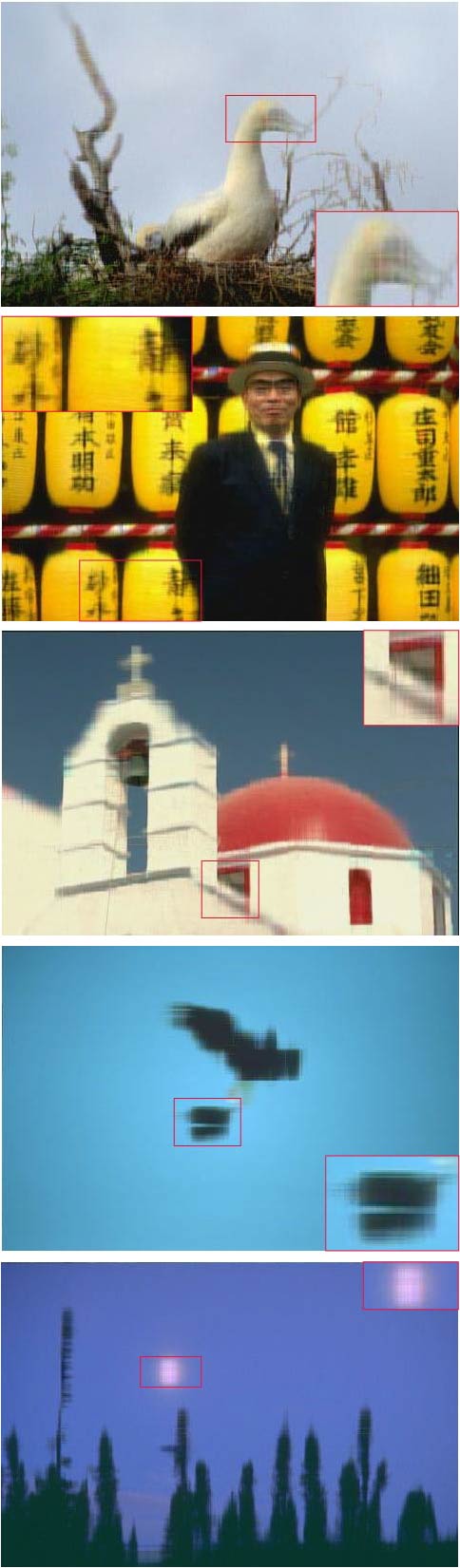}
		\caption{RPCA}
	\end{subfigure}
	\begin{subfigure}[b]{0.135\textwidth}
		\includegraphics[width=\linewidth]{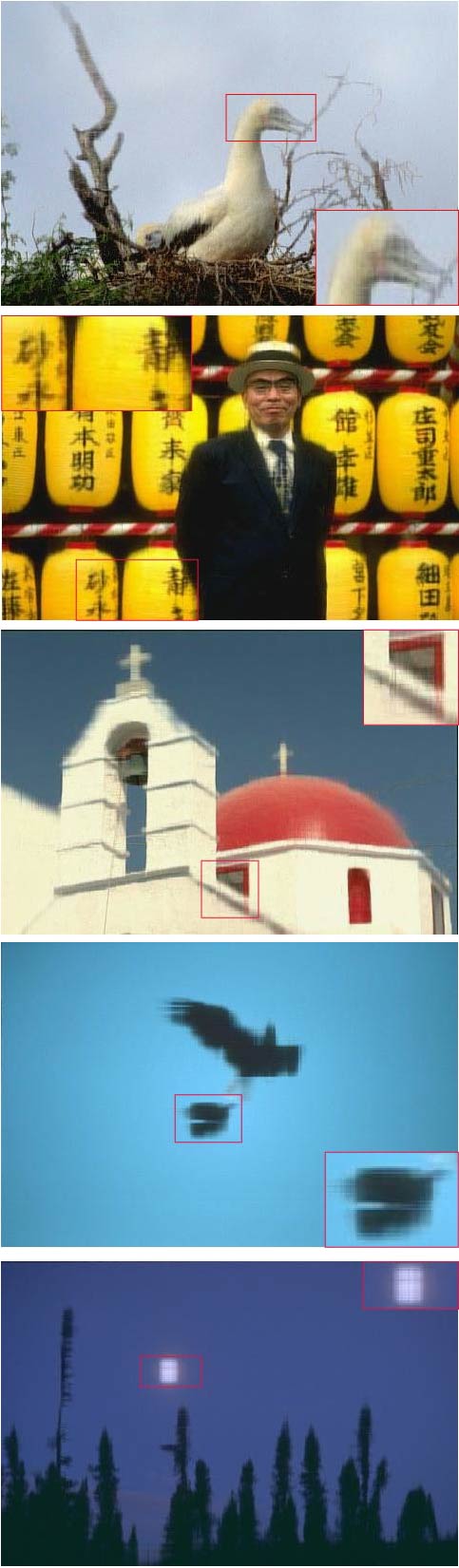}
		\caption{SNN}
	\end{subfigure}
	\begin{subfigure}[b]{0.135\textwidth}
		\includegraphics[width=\linewidth]{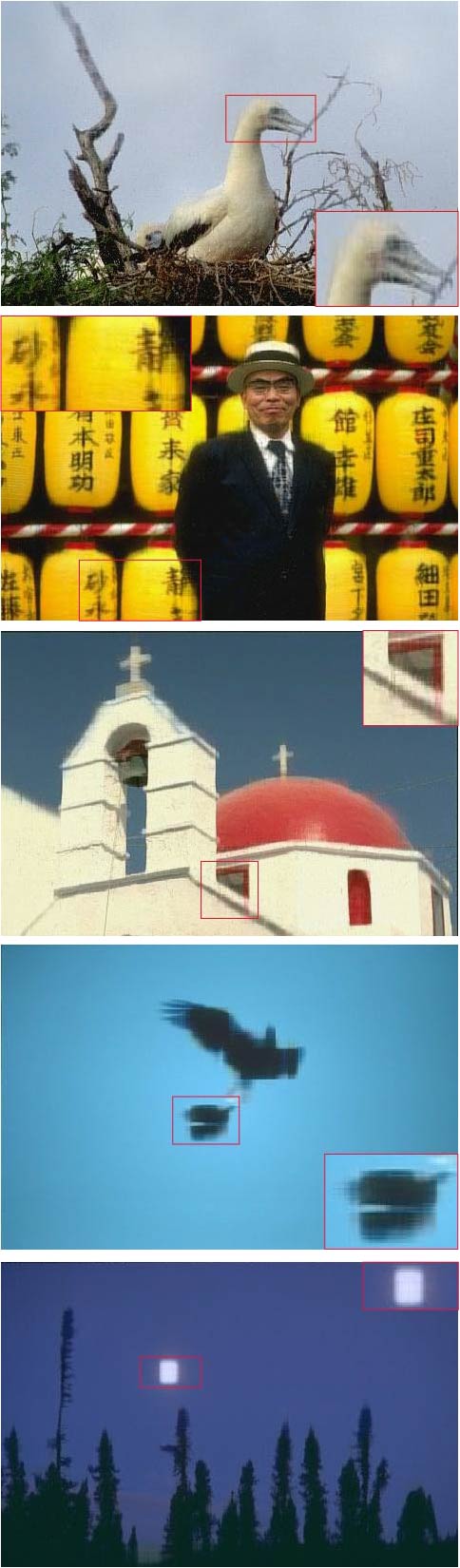}
		\caption{TRPCA}
	\end{subfigure}
	\begin{subfigure}[b]{0.135\textwidth}
		\includegraphics[width=\linewidth]{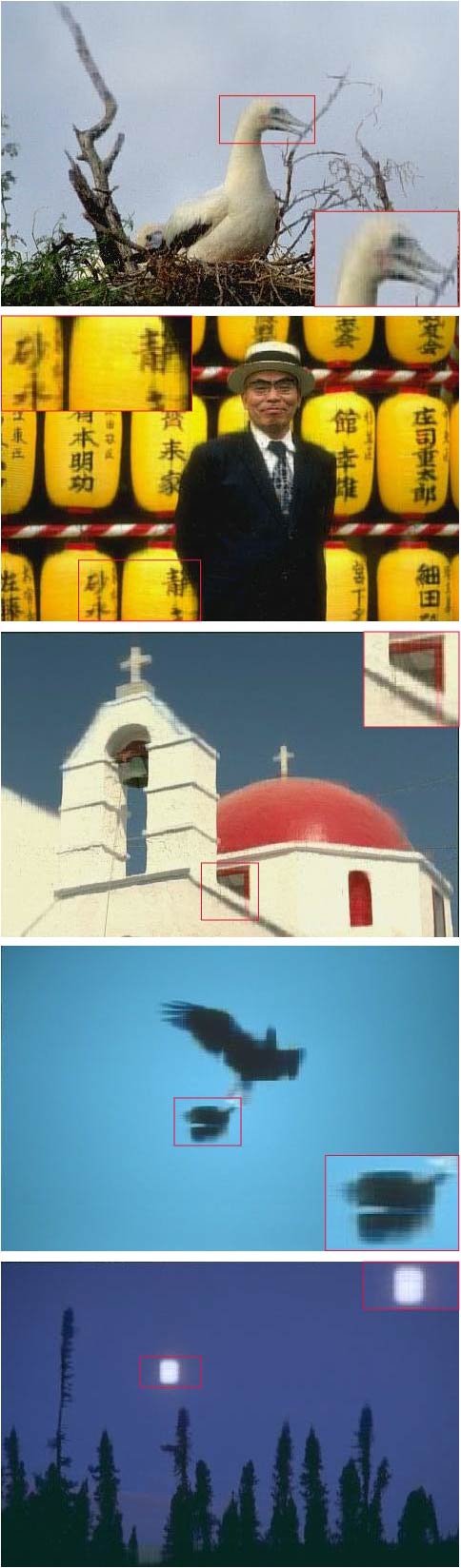}
		\caption{ETRPCA}
	\end{subfigure}
	%\begin{subfigure}[b]{0.135\textwidth}
	%\includegraphics[width=\linewidth]{}
	%\caption{FTNNRTPCA}
	%\end{subfigure}
	\begin{subfigure}[b]{0.135\textwidth}
		\includegraphics[width=\linewidth]{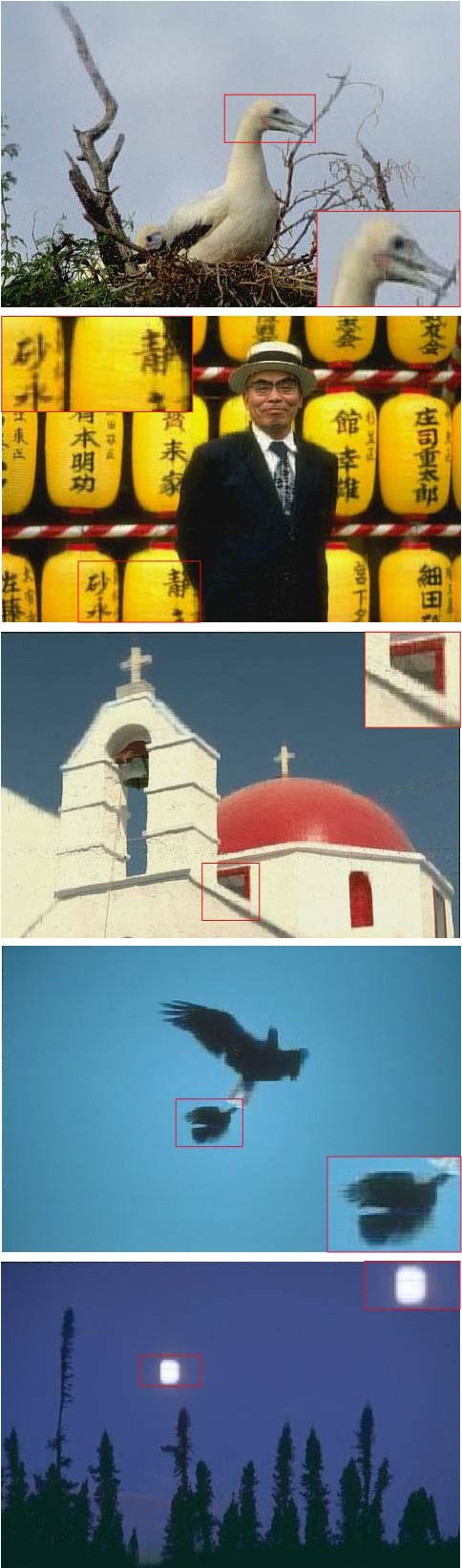}
		\caption{GWTRPCA}
	\end{subfigure} 
	\caption{Recovery performance comparison of 5 example images on the BSD dataset. (a) Original image; (b) observed image; (c)-(g) recovered images
		by RPCA, SNN, TRPCA, ETRPCA and our method GWTRPCA, respectively.The PSNR values can be seen in Tab. \ref{tab:PSNR_Example}. Best viewed in ×2 sized color pdf file.}
	\label{Fig:Example_BSD}
\end{figure*}

Secondly, we test the capability of GWTRPCA to learn adaptive weights by a modified Cauchy estimator. We use GWTRPCA to recover the same image and also set $ \mbf{w}^{intra}=\mbf{1} $. As is shown in Fig. \ref{Fig:Example_Synthetic}, the performance of our method is most outstanding. Besides, it can be seen that the weight $ \mbf{w}^{inter}$ learned by our method obeys the decreasing property mentioned in Eq. (\ref{Decrease-inter}) and (\ref{Decrease-symmetric}). Hence, GWTRPCA can achiecve better recovery by our adaptive weight learning strategy.

Now we utilize the MCE to calculate the inter-frontal slice weight as follows: 
It is worth noting that our adaptive weight learning strategy can also be employed to calculate the intra-frontal slice weight $ \mbf{w}^{intra} $. 
\begin{table}
	\begin{center}
		\normalsize
		\begin{tabular}{c|ccccc}
			\hline
			Index     & 1 & 2 & 3 & 4 & 5             \\ \hline
			RPCA & 24.51                  & 23.46                 & 24.08                   & 30.78                    & 29.95 \\ \hline
			SNN & 26.42                  & 25.20                 & 25.40                   & 26.20                    & 28.33 \\ \hline
			TRPCA & 28.28                  & 25.40                 & 27.20                   & 31.73                    & 33.73 \\ \hline
			ETRPCA & 28.49                  & 26.20                 & 27.78                   & 32.41                    & 34.74 \\ \hline
			GWTRPCA & \textbf{29.29}                  & \textbf{27.72}                 & \textbf{28.96}                   & \textbf{34.09}                    & \textbf{36.08} \\ \hline
		\end{tabular}
	\end{center}
	\caption{Comparison of PSNR values of RPCA, SNN, TRPCA, ETRPCA and GWTRPCA on 5 example images utilized in Fig. \ref{Fig:Example_BSD}. The best results are marked bold.}
	\label{tab:PSNR_Example}
\end{table} 
\section{Experiments}
In this section, we compare our method GWTRPCA with other competing methods such as RPCA \cite{RPCA}, SNN \cite{SNN}, TRPCA \cite{TRPCA} and ETRPCA \cite{ETRPCA} on the application of color image recovery and hyperspectral image denoising. We manually set the weight $ \mbf{w}^{intra} $ like ETRPCA. We devide singular values into three groups and corresponding weight is set to 0.8, 0.8 and 1.2. We utilize the Peak Signal-to-Noise Ratio (PSNR) \cite{PSNR} to evaluate recovery performance and the higher PSNR value the better recovery.

\subsection{Application to Color Image Recovery}
In this experiment, we apply GWTRPCA for color image recovery. It has been shown color images can be well approximated by low rank matrices or tensors \cite{Liu13TPAMI}. We conduct experiments on two real-world datasets: Berkeley Segmentation dataset (BSD) \cite{BSD} and Kodak image dataset (Kodak) \cite{Kodak}. For each image, we randomly set 10\% of pixels to random values in [0,255] and the positions of the corrupted pixels are unknown. All 3 channels of color images are corrupted at the same positions and the corruptions are on the sparse tubes.

We compare our method GWTRPCA with RPCA, SNN, TRPCA and ETRPCA on image recovery. For RPCA, we apply it on each channnel and receive the final recovered image by comnbining the results. We set the parameter $ \lambda=1 / \sqrt{\max \left(d_{1}, d_{2}\right)} $ as suggested in theory \cite{RPCA}. For SNN, we find that SNN does not perform well when the parameter are set to the values suggested in theory \cite{SNN}. Hence, we empirically set $ \lambda=[15,15,1.5] $ to ensure SNN can achieve great performance in most cases. For TRPCA, ETRPCA and GWTRPCA, the parameter is all set to $ \lambda=1 / \sqrt{d_{3}\max \left(d_{1}, d_{2}\right)} $.
\begin{figure}
	\includegraphics[height=5cm,width=8cm]{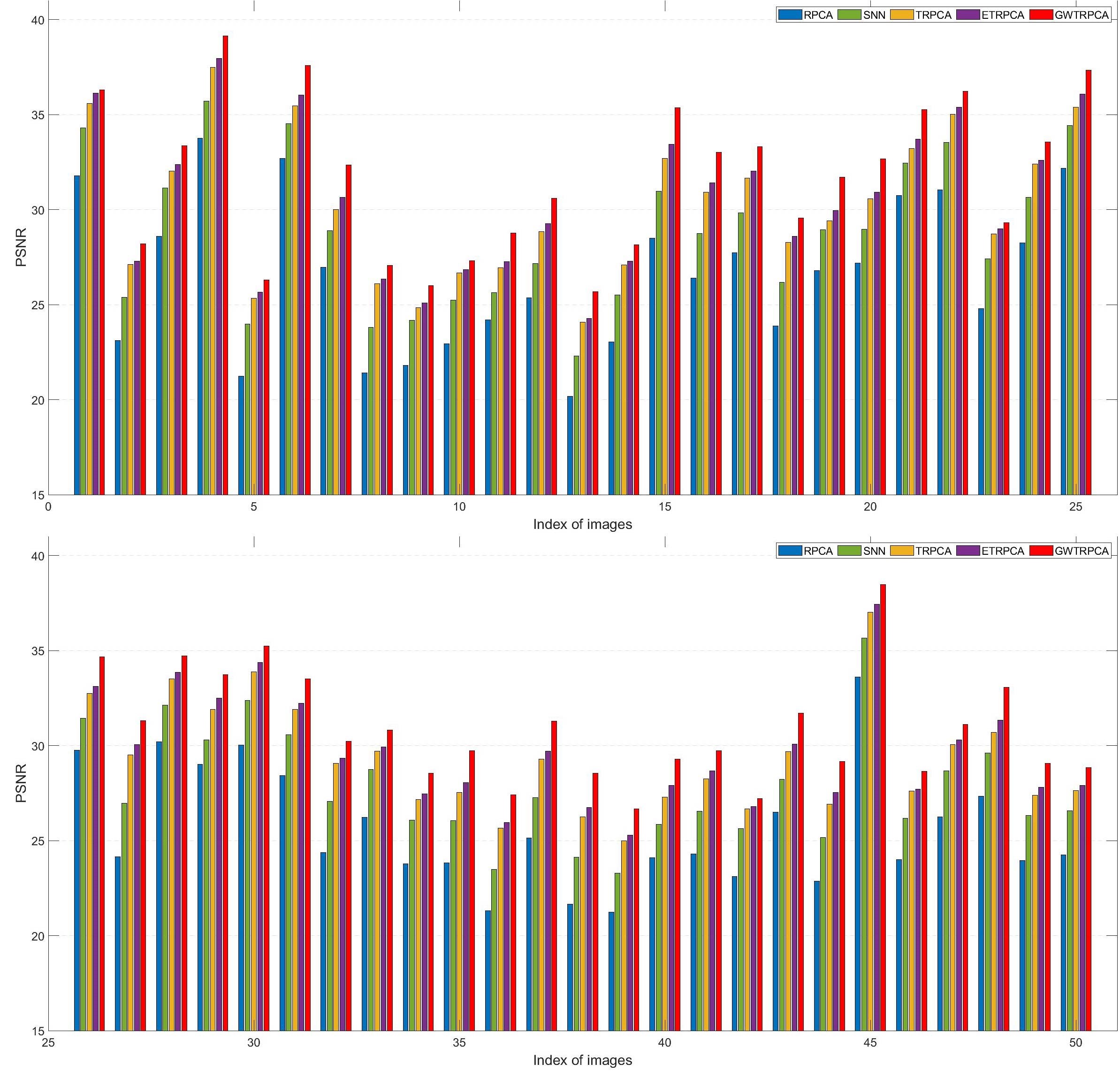}
	\caption{Comparsion of the PSNR values of RPCA, SNN, TRPCA, ETRPCA and our method GWTRPCA for image recovery on 50 images of BSD. Top row: PSNR values of the first 25 images, Bottom row: PSNR values of the remaining 25 images.}
	\label{Fig:BSD_PSNR50}
\end{figure}

\begin{table}
	\begin{center}
		\normalsize
		\begin{tabular}{c|cccc}
			\hline
			Dataset     & BSDtrain & BSDval & BSDtest & BSD            \\ \hline
			RPCA & 25.52 & 25.02 &25.10 &25.25 \\ \hline
			SNN & 27.63 & 27.20 &27.20 &27.37 \\ \hline
			TRPCA & 29.11 & 28.72 &28.63 &28.84 \\ \hline
			ETRPCA & 29.50 & 29.07 &29.03 &29.22 \\ \hline
			GWTRPCA & \textbf{30.71} & \textbf{30.12} &\textbf{30.23} &\textbf{30.40} \\ \hline
		\end{tabular}
	\end{center}
    \caption{Comparison of average PSNR values of different methods on the BSD dataset and its three subsets. BSDtrain, BSDval, BSDtest and BSD consist of 200, 100, 200 and 500 natural images, respectively. The best results are marked bold.}
	\label{tab:PSNR_ImageRecovery}
\end{table} 

Some examples of the recovered images on the BSD dataset can be seen in Fig. \ref{Fig:Example_BSD}. The PSNR values of several methods on 50 color images of BSD is shown in Fig. \ref{Fig:BSD_PSNR50}. Tab. \ref{tab:PSNR_ImageRecovery} illustrates the PSNR values of GWTRPCA and other comparison methods on the BSD dataset. More reuslts on the Kodak dataset can be seen in \emph{Supplementary Material C}. From these reuslts, we have the following conclusions. 
Firstly, tensor based methods (SNN, TRPCA, ETRPCA and GWTRPCA) are generally superior than the matrix based RPCA. The reason is that RPCA applys the image recovery on each channel independently and fails to exploit the high-order information across the channels. On the contrary, tensor based methods can employ the multi-dimensional structure of color image. Secondly, GWTRPCA can recover more edge and color information than TRPCA and ETRPCA. The reason can be that GWTRPCA considers the global information of tensor singular values and can achieve better recovery results. Besdies, ETRPCA only empirically and manually sets the weights to ensure great performance. GWTRPCA can adaptively learn the inter-frontal slice weights. 
%First, tensor based methods (SNN, TRPCA, ETRPCA and GWTRPCA) are generally superior than the matrix based RPCA. The reason is that RPCA applys the image recovery on each channel independently and fails to exploit the high-order information across the channels. On the contrary, tensor based methods can employ the multi-dimensional structure of color image. Second, GWTRPCA can recover more edge and color information than TRPCA and ETRPCA. This is probably because we assign different weights considering both intra-frontal slice and inter-frontal slice singular values. Third, it can be seen that GWTRPCA is superior to ETRPCA. The reason can be that ETRPCA only concerns on the local information of intra-frontal slice singular values. Contrarily, GWTRPCA also considers the difference of inter-frontal slice singular values and can achieve better recovery results. Besdies, ETRPCA only empirically and manually sets the weights to ensure great performance. GWTRPCA can adaptively learn the inter-frontal slice weights. 
\begin{table}
	\begin{center}
		\normalsize
		\begin{tabular}{c|cccc}
			\hline
			Noise Level     & 10\% & 20\% & 30\% & 40\%            \\ \hline
			RPCA & 30.30 & 28.58 &26.35 &23.42 \\ \hline
			SNN & 33.05 & 31.46 &29.86 &28.20 \\ \hline
			TRPCA & 45.84 & 43.82 &40.89 &35.63 \\ \hline
			ETRPCA & 45.87 & 43.90 &40.93 &37.99 \\ \hline
			GWTRPCA & \textbf{46.56} & \textbf{44.51} &\textbf{41.37} &\textbf{38.50} \\ \hline
		\end{tabular}
	\end{center}
    \caption{Comparison of average PSNR values of different methods on the Washington DC Mall HSI dataset under varying noises for HSI denoising. The best results are marked bold.}
	\label{tab:PSNR_WDC}
\end{table} 
\begin{table}
	\begin{center}
		\normalsize
		\begin{tabular}{c|cccc}
			\hline
			Noise Level     & 10\% & 20\% & 30\% & 40\%            \\ \hline
			RPCA & 27.24 & 26.30 &25.27 &23.90 \\ \hline
			SNN & 29.07 & 28.02 &26.95 &25.86 \\ \hline
			TRPCA & 41.21 & 39.84&38.40 &36.57 \\ \hline
			ETRPCA & 41.24 & 39.89 &38.44 &36.66 \\ \hline
			GWTRPCA & \textbf{48.73} & \textbf{47.26} &\textbf{45.40} &\textbf{41.78} \\ \hline
		\end{tabular}
	\end{center}
    \caption{Comparison of average PSNR values of different methods on the Pavia University HSI dataset under varying noises for HSI denoising. The best results are marked bold.}
	\label{tab:PSNR_PaviaU}
\end{table} 
\subsection{Application to Hyperspectral Image Denoising}
In this part, we verify the effectiveness of GWTRPCA on hyperspectral image (HSI) denoising. We conduct experiments on the Washington DC Mall \footnote{https://engineering.purdue.edu/$\sim$biehl/MultiSpec/hyperspectral\\.html} 
and Pavia University HSI datasets\footnote{http://www.ehu.es/ccwintco/index.php/Hyperspectral\_Remote \\ \_Sensing\_Scenes}, which can be treated as a tensor of size $ 256\times256\times150 $ and $ 610\times340\times103 $, respectively.

We add random salt-pepper noise to each hyperspectral image and the noise level is set from 10\% to 40\%. The average PSNR values of HSI denoising of GWTRPCA and other competitors are reported in Tab. \ref{tab:PSNR_WDC} and Tab. \ref{tab:PSNR_PaviaU}. It's easy to see that GWTRPCA outperforms other methods. The denoising results can be worse with the increase of noise level. Despite this, our method earns more outstanding results even when the noise level is set to 40\%.

\section{Conclusion}
In this paper, we propose the Global Weighted Tensor Robust Principal Component Analysis (GWTRPCA) method based on a new defined Global Weighted Tensor Nuclear Norm (GWTNN). We simultaneously consider the importance of intra-frontal slice and
inter-frontal slice singular values. With the aid of global information of singular values, GWTRPCA can approximate the tubal rank function more exactly. Furthermore, we utilize a MCE to adaptively learn inter-frontal slice weights. An effective algorithm is devised to solve the GWTRPCA optimization problem based on the ADMM framework. Finally, the experiments on real-world databases validate the effectiveness of the proposed method.
%\section{Acknowledgements}This work was supported by the National Natural Science Foundation of China under Grant Nos. 61702057, 61273244 and 11671161. We are grateful to the anonymous AAAI reviewers for their constructive comments.
\bibliography{Refs}

\end{document}